\documentclass{article}

\usepackage[preprint]{neurips_2026}

\usepackage[utf8]{inputenc}
\usepackage[T1]{fontenc}
\usepackage[hidelinks]{hyperref}
\usepackage{url}
\usepackage{booktabs}
\usepackage{amsfonts}
\usepackage{amsmath,amssymb,amsthm,mathtools}
\usepackage{microtype}
\usepackage{xcolor}
\usepackage{graphicx}

\title{Separating Shortcut Transition from Cross-Family OOD Failure in a Minimal Model}

\author{%
Hongmin Li\textsuperscript{1,2}\\[0.25em]
\normalfont\textsuperscript{1}School of Life Science and Technology, Institute of Science Tokyo\\
\normalfont 2-12-1 Ookayama, Meguro-ku, Tokyo 152-8550, Japan\\
\normalfont\textsuperscript{2}Department of Computational Biology and Medical Sciences\\
\normalfont Graduate School of Frontier Sciences, The University of Tokyo\\
\normalfont 5-1-5 Kashiwanoha, Kashiwa-shi, Chiba 277-8561, Japan\\
\normalfont\texttt{lihongmin@edu.k.u-tokyo.ac.jp}\\[0.25em]
\normalfont\small Researcher, School of Life Science and Technology, Institute of Science Tokyo;\\
\normalfont\small Guest Researcher, Department of Computational Biology and Medical Sciences,\\
\normalfont\small Graduate School of Frontier Sciences, The University of Tokyo.\\
\normalfont\small ORCID: \href{https://orcid.org/0000-0003-0228-0600}{0000-0003-0228-0600}
}

\newtheorem{theorem}{Theorem}

\newcommand{\E}{\mathbb{E}}
\newcommand{\Prob}{\mathbb{P}}
\newcommand{\calE}{\mathcal{E}}
\newcommand{\train}{\mathrm{train}}
\newcommand{\test}{\mathrm{test}}
\newcommand{\one}{\mathbf{1}}
\newcommand{\Unif}{\mathrm{Unif}}
\newcommand{\risk}{R}
\newcommand{\loss}{L}
\newcommand{\rhoavg}{\bar\rho_{\train}}

\DeclareMathOperator{\sign}{sign}
\DeclareMathOperator*{\argmin}{arg\,min}

\begin{document}

\maketitle

\begin{abstract}
Shortcut features are often invoked to explain out-of-distribution (OOD)
failure, but training correlation, learned shortcut use, and test-time failure
need not coincide. We study a minimal binary model with one invariant
coordinate and one family-dependent shortcut coordinate. In the deterministic
regime, positive average shortcut correlation pulls logistic ERM toward
positive shortcut weight, but ridge regularization keeps the classifier
invariant-dominated and prevents deterministic OOD failure. When the invariant
coordinate is noisy, ridge-logistic ERM switches to the shortcut rule once the
training shortcut signal exceeds the invariant signal. Whether that transition
causes failure depends on the held-out family: weaker shortcut
correlation yields positive excess risk, and sign-flipped families yield
above-chance error. Synthetic checks match these analytic regimes and show that
the same training-side transition can have different held-out consequences. The
model separates shortcut attraction, shortcut-rule transition, and
cross-family OOD failure.

\end{abstract}

\section{Introduction}
Training families can contain predictive but unstable shortcuts: features that
help on seen families but can induce failure once label correlations weaken or
flip. This tension underlies shortcut learning and invariance-based OOD work
\citep{geirhos2020shortcut,peters2016causal,rojascarulla2018invariant,
scholkopf2021toward}.

The surrounding literature spans several related questions. Domain adaptation
and representation-alignment methods emphasize source-target mismatch
\citep{bendavid2010theory,ganin2016domain,sun2016deepcoral}, causal and
invariant-prediction approaches ask which mechanisms remain stable across
environments \citep{peters2016causal,rojascarulla2018invariant,
arjovsky2019invariant,scholkopf2021toward}, and recent empirical or
theoretical critiques show that robustness- or invariance-motivated objectives
can still absorb shortcuts or fail under group shift
\citep{gulrajani2021search,krueger2021out,kamath2021does,rosenfeld2021risks,
sagawa2020distributionally,koh2021wilds,nam2020learning}.
We focus on a narrower question: which train-side observations already indicate
cross-family failure, and which do not. Existing work shows shortcut
absorption, robustness gaps, or failures of invariance-motivated training. Here
we isolate a different distinction inside one closed-form parameterization:
shortcut attraction, a training-side transition to the shortcut rule, and the
additional test-side condition that turns that transition into actual OOD
failure. The aim is not to replace those broader lines of work, but to make
this separation explicit in a minimal two-coordinate model.

\begin{figure}[t]
\centering
\includegraphics[width=\linewidth]{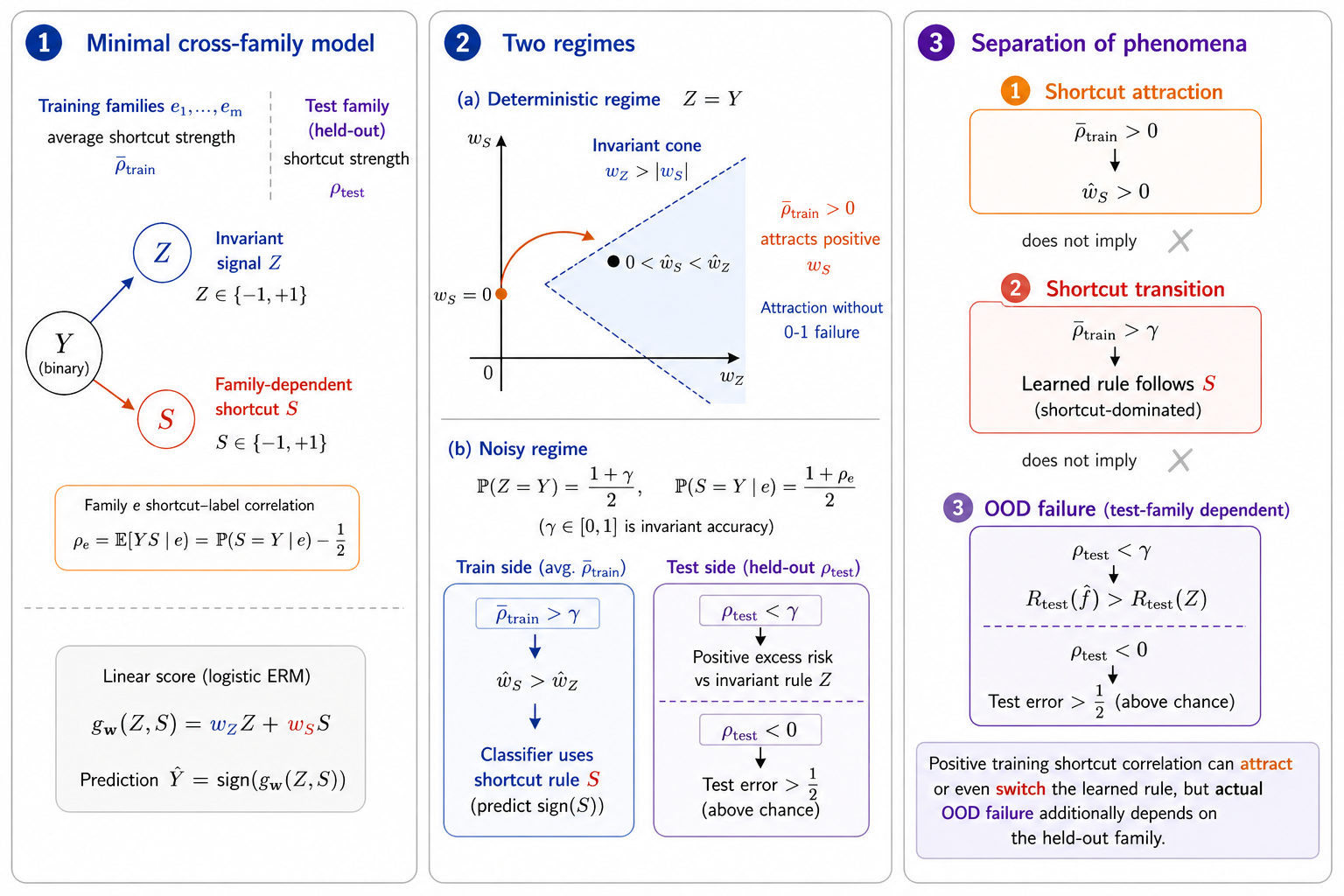}
\caption{Main conceptual separation. The model has an invariant coordinate
$Z$ and a family-dependent shortcut coordinate $S$, with training families
summarized by average shortcut correlation $\rhoavg$ and a held-out test
family by $\rho_{\test}$. Positive $\rhoavg$ can attract surrogate training
toward a positive shortcut weight even when the deterministic classifier
remains invariant-dominated. In the noisy regime, the stronger condition
$\rhoavg>\gamma$ creates a shortcut transition, while actual cross-family OOD
failure additionally depends on the held-out shortcut correlation through
$\rho_{\test}<\gamma$, with $\rho_{\test}<0$ giving above-chance error.}
\label{fig:main-conceptual-overview}
\end{figure}

We study a closed-form binary model with two observed coordinates,
\[
X=(Z,S),
\]
where $Z$ is an invariant signal and $S$ is a family-dependent shortcut. In the
deterministic baseline, the true mechanism is $Y=Z$ and family $e$ is indexed
by shortcut correlation $\rho_e=\E_e[SY]$. Closed-form $0$-$1$ risks and an
explicit train-test gap in the shortcut cone show what positive training
shortcut correlation does and does not imply: logistic ERM is pulled toward
positive shortcut weight, and the test margin worsens when family correlations
weaken or flip, but the ridge solution remains invariant-dominated and avoids
misclassification.

To obtain a direct failure statement, we add invariant noise through
independent agreement variables $A$ and $B_e$ with means $\gamma$ and
$\rho_e$, and set $Z=YA$, $S=YB_e$. In this regime, $\rhoavg$ becomes the
training-side shortcut-transition parameter. The shortcut rule already beats
the invariant rule when $\rhoavg>\gamma$, and ridge-logistic ERM obeys the
same threshold. On test families with $\rho_{\test}<\gamma$, that transition
creates positive excess risk relative to the invariant rule; on sign-flipped
families with $\rho_{\test}<0$, it yields error above chance. The same
training solution can therefore be benign on one held-out family and fail on
another.

Theorems~\ref{thm:ridge-invariant} and~\ref{thm:noisy-ridge-shortcut}
formalize those two steps. The deterministic theorem gives exact geometry,
local shortcut incentive, and test-side degradation without claiming failure.
The noisy theorem gives the shortcut-rule transition and the additional
held-out-family condition for relative or absolute failure. Figure~\ref{fig:main-conceptual-overview}
summarizes this logic, and the appendix supplies full proofs and selector-level
complements.

It does not try to explain shortcut learning in general, and it does not
propose a new robust training objective. Instead, it uses a minimal closed-form
model to show that positive
training shortcut correlation, shortcut-rule selection, and cross-family OOD
failure are distinct statements. The result is a minimal analytic separation
between shortcut attraction, shortcut-rule transition, and failure.

\section{Model and Main Results}
\label{sec:setup}
We work with binary labels $Y \in \{-1,+1\}$ and binary inputs
$X=(Z,S)\in\{-1,+1\}^2$. The coordinate $Z$ is the invariant signal and $S$ is
a family-specific shortcut.

\paragraph{Deterministic family model.}
For each family $e \in \calE$, let
\[
Y \sim \Unif\{-1,+1\}, \qquad Z = Y,
\]
and
\[
S =
\begin{cases}
Y, & \text{with probability } \frac{1+\rho_e}{2},\\[2pt]
-Y, & \text{with probability } \frac{1-\rho_e}{2},
\end{cases}
\]
where $\rho_e \in [-1,1]$ is the shortcut-label correlation of family $e$.

We study linear scores
\[
g_w(Z,S)=w_Z Z + w_S S,
\qquad
f_w(Z,S)=\sign(g_w(Z,S)).
\]
If the learner observes training families $e_1,\dots,e_m$ with weights
$\alpha_1,\dots,\alpha_m$, we write
\[
\rhoavg := \sum_{i=1}^m \alpha_i \rho_{e_i}.
\]

\paragraph{Exact risk geometry.}
Write
\[
a_+ := w_Z + w_S,
\qquad
a_- := w_Z - w_S.
\]
For every family $e$,
\[
\risk_e(w)
=
\frac{1+\rho_e}{2}\,\psi(a_+)
+
\frac{1-\rho_e}{2}\,\psi(a_-),
\]
where $\psi(t)=\one\{t<0\} + \frac12 \one\{t=0\}$. Hence the risk is piecewise
constant on four cones. On the shortcut-dominated cone $w_S>|w_Z|$, the
family-wise error is $(1-\rho_e)/2$, so weaker test shortcut correlation
creates an explicit train-test gap.

Away from the boundaries $a_+=0$ and $a_-=0$, this becomes the explicit
four-cone decomposition
\[
\risk_e(w)=
\begin{cases}
0, & w_Z>|w_S|,\\[2pt]
\frac{1-\rho_e}{2}, & w_S>|w_Z|,\\[6pt]
\frac{1+\rho_e}{2}, & -w_S>|w_Z|,\\[6pt]
1, & w_Z<-|w_S|.
\end{cases}
\]
The nonnegative shortcut cone is the one relevant for the later logistic
analysis.

Averaging over training families simply replaces $\rho_e$ by $\rhoavg$. Thus
on the shortcut-dominated cone with $w_Z,w_S\ge 0$ and $w_S>w_Z$,
\[
\risk_{\train}(w)=\frac{1-\rhoavg}{2},
\qquad
\risk_{\test}(w)=\frac{1-\rho_{\test}}{2},
\]
so
\[
\risk_{\test}(w)-\risk_{\train}(w)=\frac{\rhoavg-\rho_{\test}}{2}.
\]
If the observed families make the shortcut look predictive but the test family
weakens or flips that correlation, the exact cross-family gap is already
positive at the level of $0$-$1$ geometry.

\paragraph{Surrogate shortcut attraction.}
The $0$-$1$ risk is piecewise constant, so it does not reveal the optimization
bias of surrogate training. For logistic loss
$\ell(t)=\log(1+e^{-t})$, the weighted training objective is
\[
\loss_{\train}(w_Z,w_S)
=
\frac{1+\rhoavg}{2}\,\ell(a_+)
+
\frac{1-\rhoavg}{2}\,\ell(a_-).
\]
At $w_S=0$, with $\sigma(t)=(1+e^{-t})^{-1}$,
\[
\left.\frac{\partial \loss_{\train}}{\partial w_S}\right|_{w_S=0}
=
-\rhoavg \,\sigma(-w_Z).
\]
Hence every $w_Z>0$ with $\rhoavg>0$ is locally pushed toward positive shortcut
weight. This deterministic scope isolates \emph{shortcut attraction} without
yet implying OOD failure.

The same formulas also expose the train-test surrogate gap. For a test family
with shortcut correlation $\rho_{\test}$,
\[
\loss_{\test}(w_Z,w_S)-\loss_{\train}(w_Z,w_S)
=
\frac{\rho_{\test}-\rhoavg}{2}\bigl(\ell(a_+)-\ell(a_-)\bigr).
\]
Hence if $w_S>0$ and $\rho_{\test}<\rhoavg$, the positive shortcut weight that
training prefers already worsens the test surrogate.

Reparameterizing by
$u=w_Z+w_S$ and $v=w_Z-w_S$. In the deterministic objective, the $u$ channel
is weighted by $(1+\rhoavg)/2$ and the $v$ channel by $(1-\rhoavg)/2$. Positive
$\rhoavg$ therefore stretches the $u$ channel more than the $v$ channel, which
explains why the optimizer picks positive shortcut weight while still keeping
both channels positive.

\begin{theorem}[Deterministic ridge optimum]
\label{thm:ridge-invariant}
Under the deterministic family model above, fix $0<\rhoavg<1$ and
$\lambda>0$, and let $\hat w^\lambda=(\hat w_Z^\lambda,\hat w_S^\lambda)$
denote the ridge-logistic minimizer. Then
\[
0 < \hat w_S^\lambda < \hat w_Z^\lambda.
\]
Hence the classifier stays in the invariant cone and has zero $0$-$1$ error on
every family in the deterministic model.
\end{theorem}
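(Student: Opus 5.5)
We work in the reparameterized coordinates $u=w_Z+w_S$, $v=w_Z-w_S$ introduced just before the statement, and we take the ridge-logistic objective to be
\[
J(w)=\loss_{\train}(w_Z,w_S)+\frac{\lambda}{2}\bigl(w_Z^2+w_S^2\bigr).
\]
Since $w_Z^2+w_S^2=(u^2+v^2)/2$, the objective separates into a $u$-part and a $v$-part:
\[
J=\Bigl[\tfrac{1+\rhoavg}{2}\,\ell(u)+\tfrac{\lambda}{4}u^2\Bigr]+\Bigl[\tfrac{1-\rhoavg}{2}\,\ell(v)+\tfrac{\lambda}{4}v^2\Bigr].
\]
Each bracket is strictly convex and coercive, because $\ell$ is convex and the quadratic term is strictly convex. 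Hence $J$ has a unique minimizer $\hat w^\lambda$, determined by the two scalar problems
\[
\min_t\; c\,\ell(t)+\tfrac{\lambda}{4}t^2,
\qquad c\in\Bigl\{\tfrac{1+\rhoavg}{2},\,\tfrac{1-\rhoavg}{2}\Bigr\}.
\]
Both weights $c$ are strictly positive because $0<\rhoavg<1$.

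For a scalar problem with weight $c>0$, the first-order condition reads
\[
c\,\ell'(t)+\tfrac{\lambda}{2}t=0,
\qquad\text{that is,}\qquad
\tfrac{\lambda}{2}t=c\,\sigma(-t).
\]
The right side is strictly positive, so the solution $t(c)$ satisfies $t(c)>0$. Next we show $t(c)$ is strictly increasing in $c$. Define
\[
F(t,c)=\tfrac{\lambda}{2}t-c\,\sigma(-t).
\]
$F$ is strictly increasing in $t$, since $\sigma(-t)$ is decreasing. $F$ is strictly decreasing in $c$ whenever $t>0$. Therefore a larger $c$ forces the root $t(c)$ to move to the right. Alternatively, the implicit function theorem gives
\[
\frac{dt}{dc}=\frac{\sigma(-t)}{\lambda/2+c\,\sigma(t)\sigma(-t)}>0.
\]

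Applying this to both scalar problems, we get $\hat u=t\bigl((1+\rhoavg)/2\bigr)$ and $\hat v=t\bigl((1-\rhoavg)/2\bigr)$. Since $\rhoavg>0$, the first weight is strictly larger, so $\hat u>\hat v>0$. Translating back to the original coordinates:
\[
\hat w_S^\lambda=\frac{\hat u-\hat v}{2}>0,
\qquad
\hat w_Z^\lambda-\hat w_S^\lambda=\hat v>0.
\]
Together these give $0<\hat w_S^\lambda<\hat w_Z^\lambda$. Thus $w_Z>|w_S|$, so the four-cone decomposition gives $\risk_e(\hat w^\lambda)=0$ for every $\rho_e\in[-1,1]$. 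Equivalently, $\psi(a_+)=\psi(a_-)=0$ because $a_+=\hat u>0$ and $a_-=\hat v>0$.

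The main point needing care is the monotonicity of $t(c)$. It requires $t>0$, which we have already established, so the sign argument on $F$ is clean. The only other subtlety is convention: if the paper's ridge penalty is $\lambda\|w\|^2$ rather than $\tfrac{\lambda}{2}\|w\|^2$, the constants rescale but nothing changes qualitatively.
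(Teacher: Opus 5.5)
Your proposal is correct and follows essentially the same route as the paper: separating the objective in $u=w_Z+w_S$, $v=w_Z-w_S$ into two strictly convex scalar problems, showing each root is positive, and comparing the roots via the weights $(1\pm\rhoavg)/2$. One small remark: the condition ``$t>0$'' is not actually needed for the monotonicity in $c$, since $\sigma(-t)>0$ for every $t$. The paper exploits exactly this by observing that $\phi_\alpha'(x)-\phi_\beta'(x)=-(\alpha-\beta)\sigma(-x)<0$ holds pointwise for all $x$.
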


\paragraph{Why deterministic attraction is not yet failure.}
The deterministic model still exposes a genuine test-side degradation. For every test
family,
\[
\E_{\test}\bigl[Y g_w(X)\bigr] = w_Z + \rho_{\test} w_S.
\]
If $\rho_{\test}<0$ and $w_S>0$, then the shortcut component strictly lowers
the test margin relative to dropping $S$. Theorem~\ref{thm:ridge-invariant}
matters because it pins down where this degradation stops: ridge-logistic ERM
absorbs positive shortcut weight but remains inside the invariant cone, so the
deterministic scope gives surrogate degradation without forcing
misclassification. In this regime, train-side shortcut attraction and test-side
harm appear without an actual switch to the shortcut rule.

\paragraph{Role of ridge regularization.}
Without regularization, the deterministic model is linearly separable because
$Z=Y$. The negative derivative at $w_S=0$ should therefore be read as a local
shortcut bias. Ridge regularization is used only to convert that local bias
into a finite optimizer statement.

\paragraph{Passing to the noisy regime.}
The deterministic analysis therefore resolves only the first part of the
story: whether positive average shortcut correlation creates pressure toward
the shortcut coordinate. It cannot produce a shortcut-rule transition because
the invariant coordinate is still perfect. To study when train-side shortcut
pressure becomes actual rule selection, we keep the same two-coordinate family
model and relax only the invariant signal by introducing noise level $\gamma$.
This preserves the meaning of $\rhoavg$ as the aggregate training shortcut
strength while making the invariant-shortcut comparison nontrivial.

\paragraph{Noisy-invariant family model.}
Fix $\gamma \in (0,1]$. For each family $e$, let
\[
Y \sim \Unif\{-1,+1\},
\]
let $A,B_e\in\{-1,+1\}$ be independent of $Y$ and independent of each other,
with
\[
\E[A]=\gamma,
\qquad
\E[B_e]=\rho_e,
\]
and define
\[
\; Z=YA,
\qquad
S=YB_e.
\]
If the training distribution mixes observed families
$e_1,\dots,e_m$ with weights $\alpha_1,\dots,\alpha_m$, introduce a family
index
\[
E \sim \sum_{i=1}^m \alpha_i \delta_{e_i},
\]
independent of $(Y,A)$, and define the training-side shortcut agreement
variable
\[
B:=B_E.
\]
Then $\E[B]=\rhoavg$. Equivalently, for each family,
\[
\Prob(Z=Y)=\frac{1+\gamma}{2},
\qquad
\Prob(S=Y)=\frac{1+\rho_e}{2}.
\]

\paragraph{Rule-level bridge in the noisy regime.}
Let the invariant rule be $f_Z(z,s)=z$ and the shortcut rule be $f_S(z,s)=s$.
Under the noisy model,
\[
\risk_{\train}(f_Z)=\frac{1-\gamma}{2},
\qquad
\risk_{\train}(f_S)=\frac{1-\rhoavg}{2},
\]
and on a test family,
\[
\risk_{\test}(f_S)-\risk_{\test}(f_Z)=\frac{\gamma-\rho_{\test}}{2}.
\]
Thus the training distribution already prefers the shortcut rule exactly when
$\rhoavg>\gamma$, which is the training-side shortcut-transition threshold.
A positive test risk gap against the invariant rule appears when the test
family weakens further so that $\rho_{\test}<\gamma$, and sign-flipped test
families with $\rho_{\test}<0$ yield error above chance. The next theorem
shows that full ridge-logistic ERM over linear scores also enters the
shortcut-rule side under the same training inequality, so the transition is
not an artifact of comparing only two hand-picked rules.

The noisy regime also admits a direct coordinate-swap identity. If
$\loss_{\train}^{\mathrm{noisy}}$ denotes the population noisy logistic
objective, then
\[
\loss_{\train}^{\mathrm{noisy}}(w_Z,w_S)
-
\loss_{\train}^{\mathrm{noisy}}(w_S,w_Z)
=
\frac{\rhoavg-\gamma}{2}(w_Z-w_S).
\]
So when $\rhoavg>\gamma$, every point with $w_Z>w_S$ is worse than its
coordinate-swapped version, so the noisy objective pushes the optimizer toward
shortcut-rule behavior. The appendix sharpens this
into an exact sign statement by showing that the sign of
$\hat w_Z^\lambda-\hat w_S^\lambda$ changes at $\rhoavg=\gamma$.

The same threshold has a finite-sample counterpart. If
$\Delta_{\train}:=\rhoavg-\gamma>0$ and empirical risk is minimized over the
two-rule class $\{f_Z,f_S\}$ on $n$ i.i.d.\ draws from the family-averaged
training distribution, then the shortcut rule is selected with probability at
least $1-\exp(-n\Delta_{\train}^2/8)$. We keep that selector-level argument in
the appendix, but mention it here because the finite-sample experiment tracks
the same transition rather than introducing a new story.

\begin{theorem}[Noisy ridge-logistic shortcut transition]
\label{thm:noisy-ridge-shortcut}
Under the noisy-invariant family model above, fix $\lambda>0$ and let
$\hat w^\lambda=(\hat w_Z^\lambda,\hat w_S^\lambda)$ denote the ridge-logistic
minimizer. If $\rhoavg>\gamma$, then
\[
\hat w_Z^\lambda + \hat w_S^\lambda > 0,
\qquad
\hat w_Z^\lambda - \hat w_S^\lambda < 0.
\]
Hence $f_{\hat w^\lambda}(z,s)=s$, and on any test family
\[
\risk_{\test}\!\bigl(f_{\hat w^\lambda}\bigr)
- \risk_{\test}\!\bigl((z,s)\mapsto z\bigr)
=
\frac{\gamma-\rho_{\test}}{2}.
\]
In particular, if $\rho_{\test}<\gamma$, this test risk gap is strictly
positive, and if $\rho_{\test}<0$, then
\[
\risk_{\test}\!\bigl(f_{\hat w^\lambda}\bigr)
=
\frac{1-\rho_{\test}}{2}
>
\frac12.
\]
\end{theorem}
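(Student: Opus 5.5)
The plan is to work directly with the population ridge-logistic objective in the noisy model. I exploit two reflection symmetries of its domain, each of which changes the objective by an explicit linear term. Since $Yg_w(X)=w_Z A+w_S B$ with $Y^2=1$, and $E$ is independent of $(Y,A)$, the pair $(A,B)$ has independent $\pm1$ coordinates with means $\gamma$ and $\rhoavg$. Write $p_A=(1+\gamma)/2$ and $p_B=(1+\rhoavg)/2$, and set $u=w_Z+w_S$, $d=w_Z-w_S$. The objective is
\[
J(w)=p_Ap_B\,\ell(u)+(1-p_A)(1-p_B)\,\ell(-u)+p_A(1-p_B)\,\ell(d)+(1-p_A)p_B\,\ell(-d)+\tfrac{\lambda}{2}\|w\|^2 .
\]
It is strictly convex and coercive, so $\hat w^\lambda$ exists and is unique. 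The only identity I need is $\ell(t)-\ell(-t)=-t$.

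\textbf{Step 1 ($d<0$).} The coordinate swap $w\mapsto(w_S,w_Z)$ fixes $u$ and the ridge term and sends $d\mapsto -d$. This gives the identity already stated in the text,
\[
J(w_Z,w_S)-J(w_S,w_Z)=\frac{\rhoavg-\gamma}{2}\,d,
\]
because $p_A(1-p_B)-(1-p_A)p_B=p_A-p_B$.

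If $\hat d>0$, the swapped point has strictly smaller objective, which is a contradiction. The boundary case $\hat d=0$ is the only delicate point. There I will compute the directional derivative of $J$ along $(-1,1)$, which changes only $d$ (at rate $-2$). Using $\ell'(0)=-\tfrac12$, this derivative equals $-(p_A-p_B)(-2)\cdot\bigl(-\tfrac12\bigr)\cdot$(sign bookkeeping), that is, a nonzero multiple of $\rhoavg-\gamma$, plus a ridge contribution $\lambda(\hat w_S-\hat w_Z)=0$. A nonzero directional derivative contradicts stationarity. Hence $\hat d<0$.

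\textbf{Step 2 ($u>0$).} I use the other reflection $w\mapsto(-w_S,-w_Z)$. It preserves $d$ and the ridge term and sends $u\mapsto-u$. The mixed terms are unchanged, and the pure terms give
\[
J(w)-J(-w_S,-w_Z)=-\bigl[p_Ap_B-(1-p_A)(1-p_B)\bigr]u=-\frac{\gamma+\rhoavg}{2}\,u .
\]
Since $\rhoavg>\gamma>0$, any point with $u<0$ is strictly beaten by its reflection. The case $u=0$ is excluded as before: the derivative along $(1,1)$ at $u=0$ is $(1-2p_Ap_B-\ldots)$, a nonzero multiple of $\gamma+\rhoavg$, while the ridge contribution is $\lambda(\hat w_Z+\hat w_S)=0$.

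\textbf{Step 3 (the rule and its risks).} With $\hat u>0>\hat d$ we have $\hat w_S>|\hat w_Z|$. On inputs with $z=s$ the score is $s\,\hat u$, and on inputs with $z=-s$ it is $-s\,\hat d$. Both have the sign of $s$, so $f_{\hat w^\lambda}(z,s)=s$.

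The test risks then follow from the rule-level bridge: $\risk_{\test}(f_S)=(1-\rho_{\test})/2$ and $\risk_{\test}(f_Z)=(1-\gamma)/2$. Their difference is $(\gamma-\rho_{\test})/2$, which is positive when $\rho_{\test}<\gamma$, and $\risk_{\test}(f_S)>\tfrac12$ when $\rho_{\test}<0$.

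I expect the main obstacle to be the boundary cases $\hat d=0$ and $\hat u=0$, where the symmetry argument gives only equality. These need the first-order computation with $\ell'(0)=-\tfrac12$, with care over signs.
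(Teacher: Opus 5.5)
Your proof is correct, but it is organized differently from the paper's. The paper uses $\ell(-t)=\ell(t)+t$ and $w_Z^2+w_S^2=(u^2+v^2)/2$ to write the penalized objective as $\Phi_u(u)+\Phi_v(v)$. It then computes $\Phi_u'(0)=-(\gamma+\rhoavg)/4<0$ and $\Phi_v'(0)=(\rhoavg-\gamma)/4>0$, and reads off $u^\star>0>v^\star$ from strict monotonicity of each one-dimensional derivative, so one calculation covers the open half-planes and the boundary lines at once.

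You instead exclude the open half-planes $d>0$ and $u<0$ by a global, zeroth-order comparison with the reflected point. This uses the main-text swap identity and its analogue for $(w_Z,w_S)\mapsto(-w_S,-w_Z)$, and you use calculus only on the fixed lines $d=0$ and $u=0$. Those boundary computations are the paper's $\Phi_v'(0)$ and $\Phi_u'(0)$ up to the chain-rule factors $-2$ and $2$. Your directional derivatives are exactly $(\gamma-\rhoavg)/2$ along $(-1,1)$ and $-(\gamma+\rhoavg)/2$ along $(1,1)$. Your sign bookkeeping there is loose, but only their nonvanishing is used, so this is harmless.

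Once you have these two derivatives, strict convexity of $J$ in $d$ at fixed $u$, and in $u$ at fixed $d$, already forces $\hat d<0<\hat u$. So the reflection step is logically dispensable. What it buys is an argument whose off-boundary part uses no derivatives at all. That makes the main-text swap heuristic rigorous, and it carries over unchanged to any margin loss with $\ell(t)<\ell(-t)$ for $t>0$ and any penalty invariant under both reflections, whether or not that penalty separates in $(u,v)$.

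The paper's route is shorter and yields the exact sign boundary at $\rhoavg=\gamma$ from the same derivative. It also reduces the minimizer to two independent one-dimensional root problems, which is what the population panels solve numerically.
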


\section{Synthetic Checks}
We report two synthetic checks aligned with the theory: population geometry and
finite-sample noisy ERM. They numerically test the main-text claims rather than
provide benchmark coverage.

\paragraph{Protocol.}
The population panels are computed from closed-form formulas or one-dimensional
root solves. For finite-sample plots we use $15$ sample sizes between $20$ and
$600$, $1400$ repetitions per sample size, balanced family sampling, and
$95\%$ confidence bands across repetitions. Ridge-logistic ERM is optimized
over the four sufficient binary states, so the visible uncertainty comes from
Monte Carlo variation rather than a large-scale optimizer. The anonymous
supplement releases the exact figure-generation script. Regenerating the two
main figures with `--main-only' on a single Apple M4 Max CPU workstation with
36\,GB unified memory took about $4.2$ seconds, peaked at roughly $130$\,MB
memory, and used no GPU or external cluster.

\paragraph{Population geometry.}
Figure~\ref{fig:population-geometry} shows the two analytic regimes. The left
panel reproduces Theorem~\ref{thm:ridge-invariant}: positive average training
shortcut correlation yields positive shortcut weight while the invariant
coefficient stays larger. The right panel plots the sign of $\hat w_S-\hat
w_Z$ over $(\gamma,\rhoavg)$ and recovers the exact sign boundary
$\rhoavg=\gamma$ from the noisy analysis. Shortcut
weight already appears in the deterministic scope, but shortcut-rule behavior
requires the strictly stronger noisy threshold.

\begin{figure}[t]
\centering
\includegraphics[width=\linewidth]{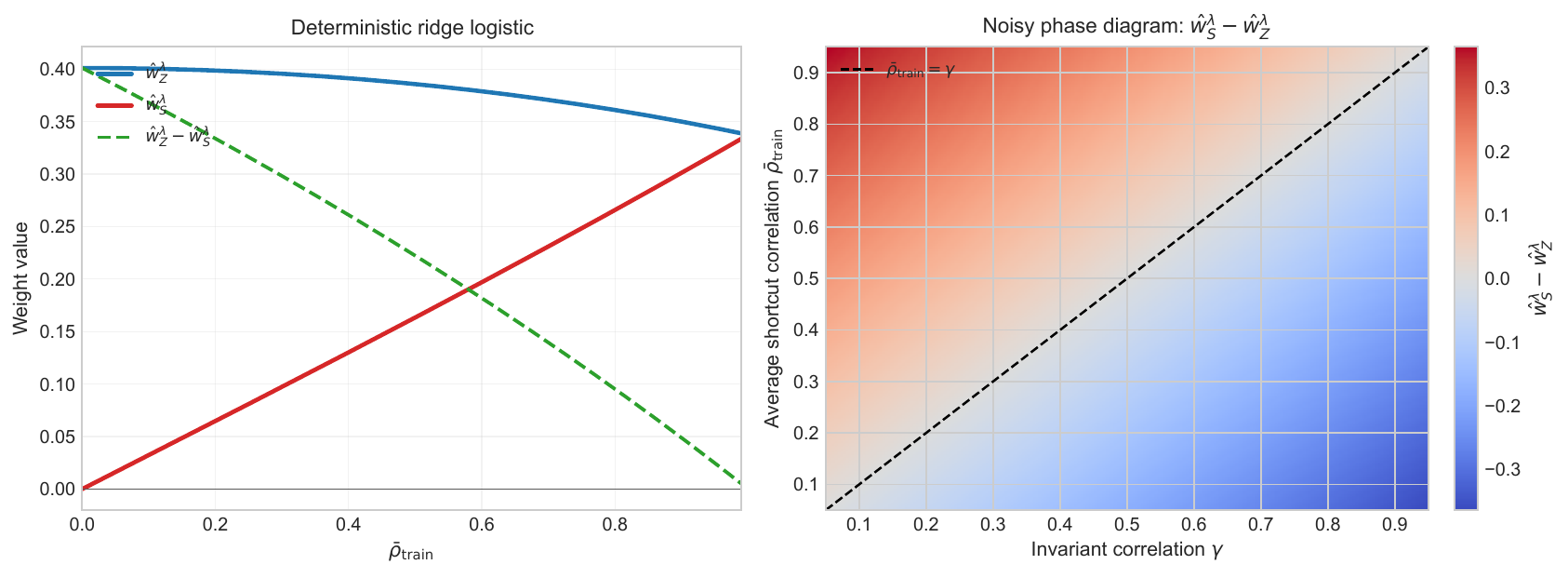}
\caption{Population geometry. Left: deterministic ridge-logistic weights versus
average training shortcut correlation. Right: noisy phase diagram via the sign
of $\hat w_S-\hat w_Z$, with exact sign boundary $\rhoavg=\gamma$.}
\label{fig:population-geometry}
\end{figure}

\paragraph{Finite-sample shortcut selection.}
Figure~\ref{fig:finite-sample-erm} fixes
$(\gamma,\rhoavg)=(0.55,0.80)$ and keeps the training distribution fixed while
changing only the held-out family. The left panel tracks the training-side
probability of shortcut-rule behavior. The right panel evaluates the same
finite-sample estimator on two test families: a failure-side family with
$\rho_{\test}=-0.30$ and a no-failure control with $\rho_{\test}=0.70$. The
ridge-logistic curves therefore illustrate the main separation claim directly:
the training-side transition is the same in both cases, but the held-out
outcome depends on the extra test-side inequality. The sign-flipped family
approaches the absolute-failure side of Theorem~\ref{thm:noisy-ridge-shortcut},
while the control family stays below both chance and the invariant baseline.

\begin{figure}[t]
\centering
\includegraphics[width=\linewidth]{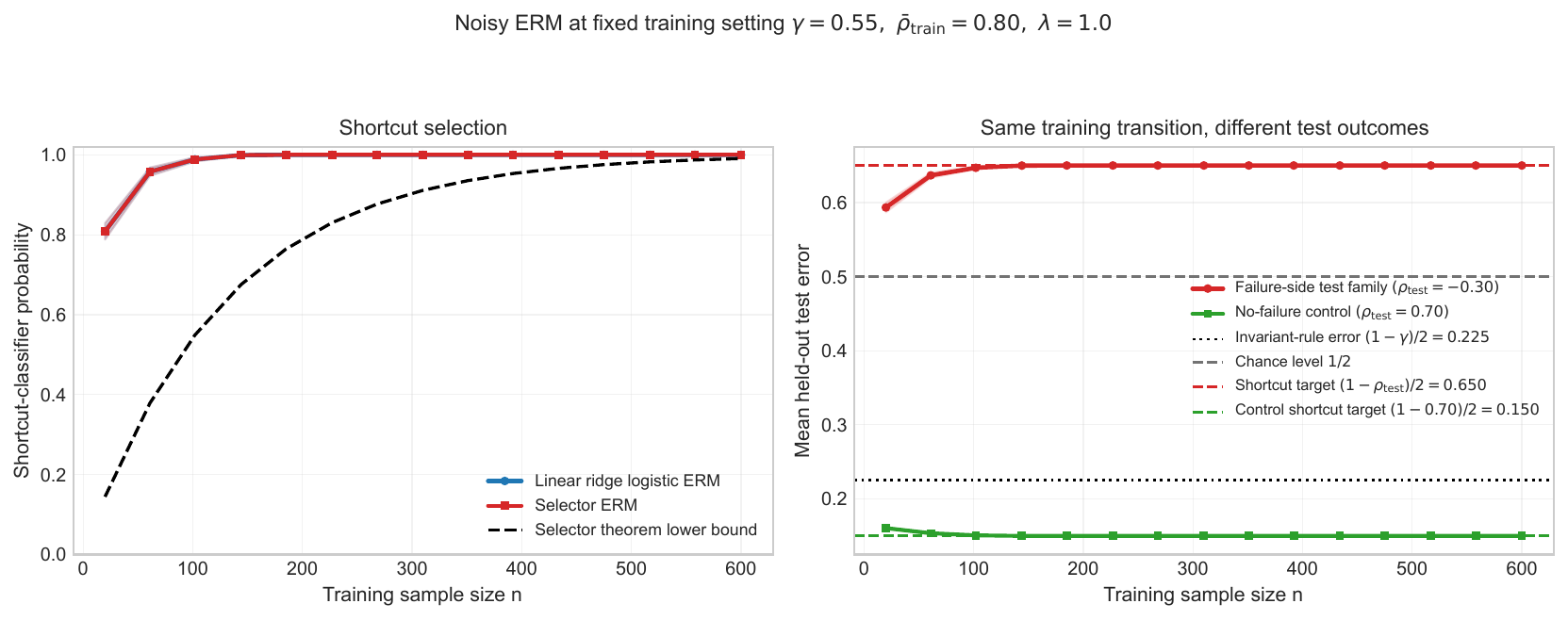}
\caption{Finite-sample noisy ERM at fixed training setting
$(\gamma,\rhoavg)=(0.55,0.80)$. Left: training-side probability of
shortcut-rule behavior. Right: mean test error on a failure-side family
$\rho_{\test}=-0.30$ and on a no-failure control $\rho_{\test}=0.70$,
with invariant-rule and chance baselines.}
\label{fig:finite-sample-erm}
\end{figure}

\paragraph{What the checks show.}
For this paper's scope, Figure~\ref{fig:population-geometry} validates the
deterministic/noisy separation and the training-side sign boundary at
population level, and Figure~\ref{fig:finite-sample-erm} shows that one
shared training transition can yield different held-out outcomes depending on
the test family. We therefore use the experiments as theorem-aligned sanity
checks rather than as a broader empirical study of the full test-family phase
diagram.

\section{Discussion}
The paper studies a narrow diagnostic question. Positive training shortcut
correlation, learned shortcut use, and test-time failure are often discussed
together, but the minimal model shows that they can come apart. Inside one
closed-form parameterization, Theorem~\ref{thm:ridge-invariant} shows that
deterministic shortcut attraction and test-side degradation can occur without
deterministic misclassification, while Theorem~\ref{thm:noisy-ridge-shortcut}
shows what must be added before a shortcut-rule transition and cross-family
failure follow. The appendix selector results reinforce the same control
parameter rather than introducing a separate story.

The asymmetry between the two theorems matters. Theorem~\ref{thm:ridge-invariant}
is not a failure theorem; it rules out an overly strong reading of train-side
evidence. Theorem~\ref{thm:noisy-ridge-shortcut} then adds the stronger noisy
condition that actually moves ridge-logistic ERM onto the shortcut rule. Even
there, the learned training solution is still not enough by itself: failure
only follows after the held-out family satisfies $\rho_{\test}<\gamma$, and
above-chance error appears only on the stricter sign-flip side
$\rho_{\test}<0$.

The experiments follow the same split. Figure~\ref{fig:population-geometry}
checks the population geometry behind both regimes, while
Figure~\ref{fig:finite-sample-erm} keeps the training distribution fixed and
changes only the held-out family, so the same learned shortcut tendency can be
seen producing failure-side and no-failure-side outcomes. The figures verify
the separation logic of the theory; they are not benchmark evidence and do not
map the full test-family phase diagram.

One modeling choice deserves emphasis. The main text parameterizes the observed
training families through the average shortcut strength $\rhoavg$ rather than a
richer family descriptor. In this minimal binary model, the exact population
risk, the rule-level comparison, and the noisy training-side transition all
collapse to the same family-average control parameter, while actual failure
still depends on the separate test-side comparison $\rho_{\test}<\gamma$.
Richer family heterogeneity may matter in larger models, but that is a
different problem from the one studied here.

The broader impact of this work is mainly diagnostic. Positively, clearer
separation between training-side shortcut transition and test-side failure can
help robustness evaluations avoid over-reading training correlations as direct
deployment guarantees. Negatively, this same minimal model could be
over-generalized to settings far outside its binary assumptions or used to
replace empirical auditing with purely analytic arguments, which is why we keep
the claims narrow and treat the figures as theorem-aligned checks rather than
benchmark evidence.

\section{Conclusion}
This paper studies a minimal model of cross-family shortcut-driven OOD failure.
Within one shared binary construction, positive training shortcut correlation
can create shortcut attraction without failure, the stronger noisy inequality
$\rhoavg>\gamma$ can move ridge-logistic ERM onto the shortcut rule, and actual
test-time failure still requires a separate held-out condition
$\rho_{\test}<\gamma$. Sign-flipped families $\rho_{\test}<0$ then yield
above-chance error.

The deterministic theorem shows why train-side shortcut evidence is not enough:
attraction and surrogate degradation already appear before misclassification
does. The noisy theorem adds the threshold for shortcut-rule transition, while
the test family still determines whether that transition remains benign or
becomes actual OOD failure. The finite-sample check mirrors the same split
under a fixed training setup.

The model is binary, the noisy regime uses independent agreement variables for
invariant and shortcut coordinates, and the figures serve as theorem-aligned
checks rather than benchmark evidence. Those
restrictions buy exact formulas and closed-form thresholds. Extending the same
separation to richer family structure is the natural next step.

\bibliographystyle{plainnat}
\bibliography{references}

\appendix
\section{Proofs for the Main Theorems}

Throughout, let
\[
\ell(t)=\log(1+e^{-t}),
\qquad
\sigma(t)=(1+e^{-t})^{-1}.
\]

\begin{proof}[Proof of Theorem~\ref{thm:ridge-invariant}]
Under the deterministic family model, the weighted ridge-logistic objective is
\[
J_\lambda(w_Z,w_S)
:=
\frac{1+\rhoavg}{2}\,\ell(w_Z+w_S)
+
\frac{1-\rhoavg}{2}\,\ell(w_Z-w_S)
+
\frac{\lambda}{2}(w_Z^2+w_S^2).
\]
Set
\[
\alpha:=\frac{1+\rhoavg}{2},
\qquad
\beta:=\frac{1-\rhoavg}{2},
\qquad
u:=w_Z+w_S,
\qquad
v:=w_Z-w_S.
\]
Because $0<\rhoavg<1$, we have $1>\alpha>\beta>0$. Also
\[
w_Z=\frac{u+v}{2},
\qquad
w_S=\frac{u-v}{2},
\qquad
w_Z^2+w_S^2=\frac{u^2+v^2}{2}.
\]
Hence
\[
J_\lambda(w_Z,w_S)
=
\phi_\alpha(u)+\phi_\beta(v),
\qquad
\phi_k(x):=k\ell(x)+\frac{\lambda}{4}x^2.
\]
Each $\phi_k$ is strictly convex, with derivative
\[
\phi_k'(x)=-k\sigma(-x)+\frac{\lambda}{2}x.
\]
Since $\phi_k'(0)=-k/2<0$ and $\phi_k'(x)\to+\infty$ as $x\to+\infty$, the
unique minimizer $x_k^\star$ of $\phi_k$ satisfies $x_k^\star>0$.

Now compare the two channels. For every $x$,
\[
\phi_\alpha'(x)-\phi_\beta'(x)=-(\alpha-\beta)\sigma(-x)<0.
\]
Because both derivatives are strictly increasing and each has exactly one root,
their roots satisfy
\[
x_\alpha^\star>x_\beta^\star>0.
\]
The unique minimizer of $J_\lambda$ is therefore
\[
u^\star=x_\alpha^\star,
\qquad
v^\star=x_\beta^\star,
\]
which implies
\[
\hat w_S^\lambda=\frac{u^\star-v^\star}{2}>0,
\qquad
\hat w_Z^\lambda=\frac{u^\star+v^\star}{2}>\hat w_S^\lambda.
\]
So the minimizer remains in the invariant cone. Because the deterministic model
has $Z=Y$, any score with $w_Z>|w_S|$ predicts $Y$ correctly on every family,
and the resulting $0$-$1$ error is zero.
\end{proof}

\begin{proof}[Proof of Theorem~\ref{thm:noisy-ridge-shortcut}]
Under the noisy family model, define
\[
A:=YZ,
\qquad
B:=YS.
\]
By construction, $A,B\in\{-1,+1\}$ are independent, and under the
family-averaged training distribution induced by the random family index $E$
and the mixture variable $B=B_E$,
\[
\Prob(A=1)=\frac{1+\gamma}{2},
\qquad
\Prob(B=1)=\frac{1+\rhoavg}{2}.
\]
Thus
\begin{align*}
\loss_{\train}^{\mathrm{noisy}}(w_Z,w_S)
&:=\E_{\train}[\ell(Y(w_ZZ+w_SS))] \\
&=
\frac{(1+\gamma)(1+\rhoavg)}{4}\,\ell(w_Z+w_S)
+
\frac{(1+\gamma)(1-\rhoavg)}{4}\,\ell(w_Z-w_S) \\
&\qquad
+
\frac{(1-\gamma)(1+\rhoavg)}{4}\,\ell(-w_Z+w_S)
+
\frac{(1-\gamma)(1-\rhoavg)}{4}\,\ell(-w_Z-w_S).
\end{align*}
Set again
\[
u:=w_Z+w_S,
\qquad
v:=w_Z-w_S.
\]
Using $\ell(-t)=\ell(t)+t$, the objective separates as
\[
\loss_{\train}^{\mathrm{noisy}}(w_Z,w_S)
=
\phi_u(u)+\phi_v(v),
\]
where
\[
\phi_u(u)
:=
\frac{1+\gamma\rhoavg}{2}\,\ell(u)
+
\frac{(1-\gamma)(1-\rhoavg)}{4}\,u,
\]
\[
\phi_v(v)
:=
\frac{1-\gamma\rhoavg}{2}\,\ell(v)
+
\frac{(1-\gamma)(1+\rhoavg)}{4}\,v.
\]
Adding the ridge penalty gives
\[
J_\lambda^{\mathrm{noisy}}(w_Z,w_S)
=
\Phi_u(u)+\Phi_v(v),
\]
with
\[
\Phi_u(u):=\phi_u(u)+\frac{\lambda}{4}u^2,
\qquad
\Phi_v(v):=\phi_v(v)+\frac{\lambda}{4}v^2.
\]
Each term is strictly convex, so the minimizer is unique and separates across
$u$ and $v$.

Differentiate:
\[
\Phi_u'(u)
=
-\frac{1+\gamma\rhoavg}{2}\sigma(-u)
+
\frac{(1-\gamma)(1-\rhoavg)}{4}
+
\frac{\lambda}{2}u,
\]
\[
\Phi_v'(v)
=
-\frac{1-\gamma\rhoavg}{2}\sigma(-v)
+
\frac{(1-\gamma)(1+\rhoavg)}{4}
+
\frac{\lambda}{2}v.
\]
At the origin,
\[
\Phi_u'(0)
=
-\frac{1+\gamma\rhoavg}{4}
+
\frac{(1-\gamma)(1-\rhoavg)}{4}
=
-\frac{\gamma+\rhoavg}{4}
<
0,
\]
and, because $\rhoavg>\gamma$,
\[
\Phi_v'(0)
=
-\frac{1-\gamma\rhoavg}{4}
+
\frac{(1-\gamma)(1+\rhoavg)}{4}
=
\frac{\rhoavg-\gamma}{4}
>
0.
\]
Since $\Phi_u'$ is strictly increasing and tends to $+\infty$ as
$u\to+\infty$, its unique root satisfies $u^\star>0$. Since $\Phi_v'$ is
strictly increasing and tends to $-\infty$ as $v\to-\infty$, its unique root
satisfies $v^\star<0$. Therefore
\[
\hat w_Z^\lambda+\hat w_S^\lambda=u^\star>0,
\qquad
\hat w_Z^\lambda-\hat w_S^\lambda=v^\star<0.
\]

To identify the induced classifier, fix any $(z,s)\in\{-1,+1\}^2$. If $z=s$,
then
\[
\hat w_Z^\lambda z+\hat w_S^\lambda s=u^\star s,
\]
whose sign is $s$ because $u^\star>0$. If $z=-s$, then
\[
\hat w_Z^\lambda z+\hat w_S^\lambda s=v^\star z=-v^\star s,
\]
whose sign is again $s$ because $v^\star<0$. Hence
\[
f_{\hat w^\lambda}(z,s)=s
\qquad
\text{for every } (z,s)\in\{-1,+1\}^2.
\]

On a test family, the shortcut rule therefore has risk
\[
\risk_{\test}(f_{\hat w^\lambda})
=
\Prob_{\test}(S\neq Y)
=
\frac{1-\rho_{\test}}{2},
\]
while the invariant rule has risk
\[
\risk_{\test}\bigl((z,s)\mapsto z\bigr)
=
\Prob_{\test}(Z\neq Y)
=
\frac{1-\gamma}{2}.
\]
Subtracting gives
\[
\risk_{\test}(f_{\hat w^\lambda})
-\risk_{\test}\bigl((z,s)\mapsto z\bigr)
=
\frac{\gamma-\rho_{\test}}{2}.
\]
If $\rho_{\test}<0$, then also
\[
\risk_{\test}(f_{\hat w^\lambda})
=
\frac{1-\rho_{\test}}{2}
>
\frac12,
\]
so the learned shortcut rule fails absolutely on the shifted test family.
\end{proof}

\paragraph{Exact sign boundary in the noisy regime.}
The same proof yields more than the failure-side statement in
Theorem~\ref{thm:noisy-ridge-shortcut}. Since
\[
\Phi_v'(0)=\frac{\rhoavg-\gamma}{4},
\]
strict convexity implies that the unique minimizer $v^\star=\hat w_Z^\lambda-\hat
w_S^\lambda$ satisfies
\[
v^\star
\begin{cases}
>0, & \rhoavg<\gamma,\\[2pt]
=0, & \rhoavg=\gamma,\\[2pt]
<0, & \rhoavg>\gamma.
\end{cases}
\]
So the sign of $\hat w_Z^\lambda-\hat w_S^\lambda$ changes exactly at
$\rhoavg=\gamma$, which is the boundary visualized in
Figure~\ref{fig:population-geometry}.

\section{Supplementary Selector Results}

Let
\[
\mathcal{F}_{\mathrm{sel}}=\{f_Z,f_S\},
\qquad
f_Z(z,s)=z,
\qquad
f_S(z,s)=s.
\]

\paragraph{Population selector ERM.}
Under the noisy-invariant family model,
\[
\risk_{\train}(f_Z)=\frac{1-\gamma}{2},
\qquad
\risk_{\train}(f_S)=\frac{1-\rhoavg}{2}.
\]
Hence if $\rhoavg>\gamma$, population ERM over $\mathcal{F}_{\mathrm{sel}}$
chooses $f_S$. On a test family,
\[
\risk_{\test}(f_S)-\risk_{\test}(f_Z)
=
\frac{1-\rho_{\test}}{2}-\frac{1-\gamma}{2}
=
\frac{\gamma-\rho_{\test}}{2}.
\]
So whenever $\rho_{\test}<\gamma$, this selector-level shortcut choice incurs
strictly positive OOD risk gap over the invariant rule.

\paragraph{Finite-sample selector ERM.}
Let $\Delta_{\train}:=\rhoavg-\gamma>0$, and let
\[
\hat f_n \in \argmin_{f\in\mathcal{F}_{\mathrm{sel}}}\widehat{\risk}_n(f)
\]
minimize empirical risk over $n$ i.i.d.\ draws from the family-averaged
training distribution. Define
\[
W_i:=\one\{f_S(X_i)\neq Y_i\}-\one\{f_Z(X_i)\neq Y_i\}.
\]
Then $W_i\in[-1,1]$ and
\[
\widehat{\risk}_n(f_S)-\widehat{\risk}_n(f_Z)
=
\frac{1}{n}\sum_{i=1}^n W_i.
\]
Moreover,
\[
\E[W_i]
=
\risk_{\train}(f_S)-\risk_{\train}(f_Z)
=
\frac{\gamma-\rhoavg}{2}
=
-\frac{\Delta_{\train}}{2}.
\]
Applying Hoeffding's inequality,
\[
\Prob\!\bigl(\widehat{\risk}_n(f_S)\ge\widehat{\risk}_n(f_Z)\bigr)
\le
\exp\!\left(-\frac{n\Delta_{\train}^2}{8}\right),
\]
so
\[
\Prob\!\bigl(\widehat{\risk}_n(f_S)<\widehat{\risk}_n(f_Z)\bigr)
\ge
1-\exp\!\left(-\frac{n\Delta_{\train}^2}{8}\right).
\]
On this event, $\hat f_n=f_S$, and therefore for any test family,
\[
\risk_{\test}(\hat f_n)-\risk_{\test}(f_Z)
=
\frac{\gamma-\rho_{\test}}{2}.
\]
This is the selector-level finite-sample concentration bound referenced in the
main text.

\section*{NeurIPS Paper Checklist}

\begin{enumerate}

\item {\bf Claims}
    \item[] Question: Do the main claims made in the abstract and introduction accurately reflect the paper's contributions and scope?
    \item[] Answer: \answerYes{}
    \item[] Justification: The abstract and Introduction explicitly state the minimal-model scope, the deterministic/noisy split, and the extra test-side conditions needed for OOD failure; these claims match Sections~1--5 and the appendix proofs.
    \item[] Guidelines:
    \begin{itemize}
        \item The answer \answerNA{} means that the abstract and introduction do not include the claims made in the paper.
        \item The abstract and/or introduction should clearly state the claims made, including the contributions made in the paper and important assumptions and limitations. A \answerNo{} or \answerNA{} answer to this question will not be perceived well by the reviewers. 
        \item The claims made should match theoretical and experimental results, and reflect how much the results can be expected to generalize to other settings. 
        \item It is fine to include aspirational goals as motivation as long as it is clear that these goals are not attained by the paper. 
    \end{itemize}

\item {\bf Limitations}
    \item[] Question: Does the paper discuss the limitations of the work performed by the authors?
    \item[] Answer: \answerYes{}
    \item[] Justification: Sections~1, 4, and~5 discuss the narrow binary setting, the independent agreement-variable assumptions, and the fact that the experiments are theorem-aligned synthetic checks rather than benchmark evidence.
    \item[] Guidelines:
    \begin{itemize}
        \item The answer \answerNA{} means that the paper has no limitation while the answer \answerNo{} means that the paper has limitations, but those are not discussed in the paper. 
        \item The authors are encouraged to create a separate ``Limitations'' section in their paper.
        \item The paper should point out any strong assumptions and how robust the results are to violations of these assumptions (e.g., independence assumptions, noiseless settings, model well-specification, asymptotic approximations only holding locally). The authors should reflect on how these assumptions might be violated in practice and what the implications would be.
        \item The authors should reflect on the scope of the claims made, e.g., if the approach was only tested on a few datasets or with a few runs. In general, empirical results often depend on implicit assumptions, which should be articulated.
        \item The authors should reflect on the factors that influence the performance of the approach. For example, a facial recognition algorithm may perform poorly when image resolution is low or images are taken in low lighting. Or a speech-to-text system might not be used reliably to provide closed captions for online lectures because it fails to handle technical jargon.
        \item The authors should discuss the computational efficiency of the proposed algorithms and how they scale with dataset size.
        \item If applicable, the authors should discuss possible limitations of their approach to address problems of privacy and fairness.
        \item While the authors might fear that complete honesty about limitations might be used by reviewers as grounds for rejection, a worse outcome might be that reviewers discover limitations that aren't acknowledged in the paper. The authors should use their best judgment and recognize that individual actions in favor of transparency play an important role in developing norms that preserve the integrity of the community. Reviewers will be specifically instructed to not penalize honesty concerning limitations.
    \end{itemize}

\item {\bf Theory assumptions and proofs}
    \item[] Question: For each theoretical result, does the paper provide the full set of assumptions and a complete (and correct) proof?
    \item[] Answer: \answerYes{}
    \item[] Justification: Section~2 states the deterministic and noisy family models and theorem assumptions, and Appendix~A provides full proofs for the two main theorems together with the supplementary selector results.
    \item[] Guidelines:
    \begin{itemize}
        \item The answer \answerNA{} means that the paper does not include theoretical results. 
        \item All the theorems, formulas, and proofs in the paper should be numbered and cross-referenced.
        \item All assumptions should be clearly stated or referenced in the statement of any theorems.
        \item The proofs can either appear in the main paper or the supplemental material, but if they appear in the supplemental material, the authors are encouraged to provide a short proof sketch to provide intuition. 
        \item Inversely, any informal proof provided in the core of the paper should be complemented by formal proofs provided in appendix or supplemental material.
        \item Theorems and Lemmas that the proof relies upon should be properly referenced. 
    \end{itemize}

    \item {\bf Experimental result reproducibility}
    \item[] Question: Does the paper fully disclose all the information needed to reproduce the main experimental results of the paper to the extent that it affects the main claims and/or conclusions of the paper (regardless of whether the code and data are provided or not)?
    \item[] Answer: \answerYes{}
    \item[] Justification: Section~3 gives the exact parameter settings, sample-size grid, repetitions, balanced family sampling, confidence-band definition, and optimization setup, and the anonymous supplementary material contains the exact script used to regenerate the two figures.
    \item[] Guidelines:
    \begin{itemize}
        \item The answer \answerNA{} means that the paper does not include experiments.
        \item If the paper includes experiments, a \answerNo{} answer to this question will not be perceived well by the reviewers: Making the paper reproducible is important, regardless of whether the code and data are provided or not.
        \item If the contribution is a dataset and\slash or model, the authors should describe the steps taken to make their results reproducible or verifiable. 
        \item Depending on the contribution, reproducibility can be accomplished in various ways. For example, if the contribution is a novel architecture, describing the architecture fully might suffice, or if the contribution is a specific model and empirical evaluation, it may be necessary to either make it possible for others to replicate the model with the same dataset, or provide access to the model. In general. releasing code and data is often one good way to accomplish this, but reproducibility can also be provided via detailed instructions for how to replicate the results, access to a hosted model (e.g., in the case of a large language model), releasing of a model checkpoint, or other means that are appropriate to the research performed.
        \item While NeurIPS does not require releasing code, the conference does require all submissions to provide some reasonable avenue for reproducibility, which may depend on the nature of the contribution. For example
        \begin{enumerate}
            \item If the contribution is primarily a new algorithm, the paper should make it clear how to reproduce that algorithm.
            \item If the contribution is primarily a new model architecture, the paper should describe the architecture clearly and fully.
            \item If the contribution is a new model (e.g., a large language model), then there should either be a way to access this model for reproducing the results or a way to reproduce the model (e.g., with an open-source dataset or instructions for how to construct the dataset).
            \item We recognize that reproducibility may be tricky in some cases, in which case authors are welcome to describe the particular way they provide for reproducibility. In the case of closed-source models, it may be that access to the model is limited in some way (e.g., to registered users), but it should be possible for other researchers to have some path to reproducing or verifying the results.
        \end{enumerate}
    \end{itemize}

\item {\bf Open access to data and code}
    \item[] Question: Does the paper provide open access to the data and code, with sufficient instructions to faithfully reproduce the main experimental results, as described in supplemental material?
    \item[] Answer: \answerYes{}
    \item[] Justification: The experiments are synthetic and require no external data. The anonymous supplementary zip includes the figure-generation script, the generated PDFs, and a minimal command to reproduce the main figures.
    \item[] Guidelines:
    \begin{itemize}
        \item The answer \answerNA{} means that paper does not include experiments requiring code.
        \item Please see the NeurIPS code and data submission guidelines (\url{https://neurips.cc/public/guides/CodeSubmissionPolicy}) for more details.
        \item While we encourage the release of code and data, we understand that this might not be possible, so \answerNo{} is an acceptable answer. Papers cannot be rejected simply for not including code, unless this is central to the contribution (e.g., for a new open-source benchmark).
        \item The instructions should contain the exact command and environment needed to run to reproduce the results. See the NeurIPS code and data submission guidelines (\url{https://neurips.cc/public/guides/CodeSubmissionPolicy}) for more details.
        \item The authors should provide instructions on data access and preparation, including how to access the raw data, preprocessed data, intermediate data, and generated data, etc.
        \item The authors should provide scripts to reproduce all experimental results for the new proposed method and baselines. If only a subset of experiments are reproducible, they should state which ones are omitted from the script and why.
        \item At submission time, to preserve anonymity, the authors should release anonymized versions (if applicable).
        \item Providing as much information as possible in supplemental material (appended to the paper) is recommended, but including URLs to data and code is permitted.
    \end{itemize}

\item {\bf Experimental setting/details}
    \item[] Question: Does the paper specify all the training and test details (e.g., data splits, hyperparameters, how they were chosen, type of optimizer) necessary to understand the results?
    \item[] Answer: \answerYes{}
    \item[] Justification: Section~3 specifies the train/test family parameters, sample-size range, number of repetitions, balanced family sampling, confidence bands, and ridge-logistic optimization over the four sufficient binary states; the supplementary script fixes the remaining implementation details.
    \item[] Guidelines:
    \begin{itemize}
        \item The answer \answerNA{} means that the paper does not include experiments.
        \item The experimental setting should be presented in the core of the paper to a level of detail that is necessary to appreciate the results and make sense of them.
        \item The full details can be provided either with the code, in appendix, or as supplemental material.
    \end{itemize}

\item {\bf Experiment statistical significance}
    \item[] Question: Does the paper report error bars suitably and correctly defined or other appropriate information about the statistical significance of the experiments?
    \item[] Answer: \answerYes{}
    \item[] Justification: Section~3 reports $95\%$ confidence bands across $1400$ repetitions per sample size and states that the plotted variability comes from Monte Carlo variation across repetitions.
    \item[] Guidelines:
    \begin{itemize}
        \item The answer \answerNA{} means that the paper does not include experiments.
        \item The authors should answer \answerYes{} if the results are accompanied by error bars, confidence intervals, or statistical significance tests, at least for the experiments that support the main claims of the paper.
        \item The factors of variability that the error bars are capturing should be clearly stated (for example, train/test split, initialization, random drawing of some parameter, or overall run with given experimental conditions).
        \item The method for calculating the error bars should be explained (closed form formula, call to a library function, bootstrap, etc.)
        \item The assumptions made should be given (e.g., Normally distributed errors).
        \item It should be clear whether the error bar is the standard deviation or the standard error of the mean.
        \item It is OK to report 1-sigma error bars, but one should state it. The authors should preferably report a 2-sigma error bar than state that they have a 96\% CI, if the hypothesis of Normality of errors is not verified.
        \item For asymmetric distributions, the authors should be careful not to show in tables or figures symmetric error bars that would yield results that are out of range (e.g., negative error rates).
        \item If error bars are reported in tables or plots, the authors should explain in the text how they were calculated and reference the corresponding figures or tables in the text.
    \end{itemize}

\item {\bf Experiments compute resources}
    \item[] Question: For each experiment, does the paper provide sufficient information on the computer resources (type of compute workers, memory, time of execution) needed to reproduce the experiments?
    \item[] Answer: \answerYes{}
    \item[] Justification: Section~3 states that regenerating the two main figures used a single Apple M4 Max CPU workstation with 36\,GB unified memory, took about $4.2$ seconds, peaked at roughly $130$\,MB memory, and used no GPU or external cluster.
    \item[] Guidelines:
    \begin{itemize}
        \item The answer \answerNA{} means that the paper does not include experiments.
        \item The paper should indicate the type of compute workers CPU or GPU, internal cluster, or cloud provider, including relevant memory and storage.
        \item The paper should provide the amount of compute required for each of the individual experimental runs as well as estimate the total compute. 
        \item The paper should disclose whether the full research project required more compute than the experiments reported in the paper (e.g., preliminary or failed experiments that didn't make it into the paper). 
    \end{itemize}
    
\item {\bf Code of ethics}
    \item[] Question: Does the research conducted in the paper conform, in every respect, with the NeurIPS Code of Ethics \url{https://neurips.cc/public/EthicsGuidelines}?
    \item[] Answer: \answerYes{}
    \item[] Justification: The work is synthetic theoretical/experimental research with no human subjects, scraped data, or deployment-facing release, and the supplementary material is limited to a small anonymous reproduction package.
    \item[] Guidelines:
    \begin{itemize}
        \item The answer \answerNA{} means that the authors have not reviewed the NeurIPS Code of Ethics.
        \item If the authors answer \answerNo, they should explain the special circumstances that require a deviation from the Code of Ethics.
        \item The authors should make sure to preserve anonymity (e.g., if there is a special consideration due to laws or regulations in their jurisdiction).
    \end{itemize}

\item {\bf Broader impacts}
    \item[] Question: Does the paper discuss both potential positive societal impacts and negative societal impacts of the work performed?
    \item[] Answer: \answerYes{}
    \item[] Justification: Section~4 discusses both positive impacts (clearer robustness evaluation under shift) and negative impacts (over-generalizing a minimal model or replacing empirical auditing with purely analytic arguments).
    \item[] Guidelines:
    \begin{itemize}
        \item The answer \answerNA{} means that there is no societal impact of the work performed.
        \item If the authors answer \answerNA{} or \answerNo, they should explain why their work has no societal impact or why the paper does not address societal impact.
        \item Examples of negative societal impacts include potential malicious or unintended uses (e.g., disinformation, generating fake profiles, surveillance), fairness considerations (e.g., deployment of technologies that could make decisions that unfairly impact specific groups), privacy considerations, and security considerations.
        \item The conference expects that many papers will be foundational research and not tied to particular applications, let alone deployments. However, if there is a direct path to any negative applications, the authors should point it out. For example, it is legitimate to point out that an improvement in the quality of generative models could be used to generate Deepfakes for disinformation. On the other hand, it is not needed to point out that a generic algorithm for optimizing neural networks could enable people to train models that generate Deepfakes faster.
        \item The authors should consider possible harms that could arise when the technology is being used as intended and functioning correctly, harms that could arise when the technology is being used as intended but gives incorrect results, and harms following from (intentional or unintentional) misuse of the technology.
        \item If there are negative societal impacts, the authors could also discuss possible mitigation strategies (e.g., gated release of models, providing defenses in addition to attacks, mechanisms for monitoring misuse, mechanisms to monitor how a system learns from feedback over time, improving the efficiency and accessibility of ML).
    \end{itemize}
    
\item {\bf Safeguards}
    \item[] Question: Does the paper describe safeguards that have been put in place for responsible release of data or models that have a high risk for misuse (e.g., pre-trained language models, image generators, or scraped datasets)?
    \item[] Answer: \answerNA{}
    \item[] Justification: The paper does not release high-risk models, datasets, or scraped corpora; the supplementary material only contains a small synthetic figure-generation script and the corresponding PDFs.
    \item[] Guidelines:
    \begin{itemize}
        \item The answer \answerNA{} means that the paper poses no such risks.
        \item Released models that have a high risk for misuse or dual-use should be released with necessary safeguards to allow for controlled use of the model, for example by requiring that users adhere to usage guidelines or restrictions to access the model or implementing safety filters. 
        \item Datasets that have been scraped from the Internet could pose safety risks. The authors should describe how they avoided releasing unsafe images.
        \item We recognize that providing effective safeguards is challenging, and many papers do not require this, but we encourage authors to take this into account and make a best faith effort.
    \end{itemize}

\item {\bf Licenses for existing assets}
    \item[] Question: Are the creators or original owners of assets (e.g., code, data, models), used in the paper, properly credited and are the license and terms of use explicitly mentioned and properly respected?
    \item[] Answer: \answerNA{}
    \item[] Justification: The main claims do not rely on external datasets, pretrained models, or third-party research assets; the experiments are synthetic and self-contained, and standard numerical libraries are only software dependencies.
    \item[] Guidelines:
    \begin{itemize}
        \item The answer \answerNA{} means that the paper does not use existing assets.
        \item The authors should cite the original paper that produced the code package or dataset.
        \item The authors should state which version of the asset is used and, if possible, include a URL.
        \item The name of the license (e.g., CC-BY 4.0) should be included for each asset.
        \item For scraped data from a particular source (e.g., website), the copyright and terms of service of that source should be provided.
        \item If assets are released, the license, copyright information, and terms of use in the package should be provided. For popular datasets, \url{paperswithcode.com/datasets} has curated licenses for some datasets. Their licensing guide can help determine the license of a dataset.
        \item For existing datasets that are re-packaged, both the original license and the license of the derived asset (if it has changed) should be provided.
        \item If this information is not available online, the authors are encouraged to reach out to the asset's creators.
    \end{itemize}

\item {\bf New assets}
    \item[] Question: Are new assets introduced in the paper well documented and is the documentation provided alongside the assets?
    \item[] Answer: \answerYes{}
    \item[] Justification: The anonymous supplementary zip includes a README with the reproduction command and outputs, together with the exact script and generated figure files for the released synthetic assets.
    \item[] Guidelines:
    \begin{itemize}
        \item The answer \answerNA{} means that the paper does not release new assets.
        \item Researchers should communicate the details of the dataset\slash code\slash model as part of their submissions via structured templates. This includes details about training, license, limitations, etc. 
        \item The paper should discuss whether and how consent was obtained from people whose asset is used.
        \item At submission time, remember to anonymize your assets (if applicable). You can either create an anonymized URL or include an anonymized zip file.
    \end{itemize}

\item {\bf Crowdsourcing and research with human subjects}
    \item[] Question: For crowdsourcing experiments and research with human subjects, does the paper include the full text of instructions given to participants and screenshots, if applicable, as well as details about compensation (if any)? 
    \item[] Answer: \answerNA{}
    \item[] Justification: The paper does not involve crowdsourcing or research with human subjects.
    \item[] Guidelines:
    \begin{itemize}
        \item The answer \answerNA{} means that the paper does not involve crowdsourcing nor research with human subjects.
        \item Including this information in the supplemental material is fine, but if the main contribution of the paper involves human subjects, then as much detail as possible should be included in the main paper. 
        \item According to the NeurIPS Code of Ethics, workers involved in data collection, curation, or other labor should be paid at least the minimum wage in the country of the data collector. 
    \end{itemize}

\item {\bf Institutional review board (IRB) approvals or equivalent for research with human subjects}
    \item[] Question: Does the paper describe potential risks incurred by study participants, whether such risks were disclosed to the subjects, and whether Institutional Review Board (IRB) approvals (or an equivalent approval/review based on the requirements of your country or institution) were obtained?
    \item[] Answer: \answerNA{}
    \item[] Justification: The paper does not involve crowdsourcing or research with human subjects.
    \item[] Guidelines:
    \begin{itemize}
        \item The answer \answerNA{} means that the paper does not involve crowdsourcing nor research with human subjects.
        \item Depending on the country in which research is conducted, IRB approval (or equivalent) may be required for any human subjects research. If you obtained IRB approval, you should clearly state this in the paper. 
        \item We recognize that the procedures for this may vary significantly between institutions and locations, and we expect authors to adhere to the NeurIPS Code of Ethics and the guidelines for their institution. 
        \item For initial submissions, do not include any information that would break anonymity (if applicable), such as the institution conducting the review.
    \end{itemize}

\item {\bf Declaration of LLM usage}
    \item[] Question: Does the paper describe the usage of LLMs if it is an important, original, or non-standard component of the core methods in this research? Note that if the LLM is used only for writing, editing, or formatting purposes and does \emph{not} impact the core methodology, scientific rigor, or originality of the research, declaration is not required.
    \item[] Answer: \answerNA{}
    \item[] Justification: LLMs are not part of the core methodology, proofs, or experiments in this research.
    \item[] Guidelines:
    \begin{itemize}
        \item The answer \answerNA{} means that the core method development in this research does not involve LLMs as any important, original, or non-standard components.
        \item Please refer to our LLM policy in the NeurIPS handbook for what should or should not be described.
    \end{itemize}

\end{enumerate}

\end{document}